\documentclass{article}

\usepackage{microtype}
\usepackage{graphicx}
\usepackage{booktabs}
\usepackage{multirow}
\usepackage{hyperref}

\usepackage[preprint]{icml2026}

\usepackage{amsmath,amssymb,amsthm,mathtools}
\usepackage{bbm}
\usepackage{enumitem}
\usepackage{algorithm}
\usepackage{algorithmic}

\theoremstyle{plain}
\newtheorem{theorem}{Theorem}[section]
\newtheorem{lemma}[theorem]{Lemma}
\newtheorem{corollary}[theorem]{Corollary}
\newtheorem{proposition}[theorem]{Proposition}
\theoremstyle{definition}
\newtheorem{definition}[theorem]{Definition}
\newtheorem{assumption}[theorem]{Assumption}
\theoremstyle{remark}
\newtheorem{remark}[theorem]{Remark}
\newtheorem{observation}[theorem]{Observation}

\newcommand{\E}{\mathbb{E}}

\newcommand{\Prob}{\mathbb{P}}

\newcommand{\given}{\,|\,}

\newcommand{\Ptr}{P_{\mathrm{tr}}}
\newcommand{\Pt}{P_{\mathrm{tgt}}}
\newcommand{\pit}{\pi^{\mathrm{tgt}}}
\newcommand{\pir}{\pi^{\mathrm{tr}}}
\newcommand{\Risk}{\mathcal{R}}

\newcommand{\Rt}{\Risk_{\mathrm{tgt}}}
\newcommand{\Rhat}{\widehat{\Risk}}

\newcommand{\F}{\mathcal{F}}
\newcommand{\loss}{\ell}

\newcommand{\Fair}{\mathcal{V}}
\newcommand{\Feps}{\F_{\varepsilon}}

\icmltitlerunning{Fix Representation Before Fairness: Shrinkage Correction and the True Price of Fairness}

\begin{document}

\twocolumn[
\icmltitle{Fix Representation (Optimally) Before Fairness: \\
Finite-Sample Shrinkage Population Correction \\
and the True Price of Fairness Under Subpopulation Shift}

\icmlsetsymbol{equal}{*}

\begin{icmlauthorlist}
\icmlauthor{Amir Asiaee}{vumc}
\icmlauthor{Kaveh Aryan}{kcl}
\end{icmlauthorlist}

\icmlaffiliation{vumc}{Department of Biostatistics, Vanderbilt University Medical Center, Nashville, TN 37232, USA}
\icmlaffiliation{kcl}{Department of Informatics, King's College London, WC2R 2LS London, UK}

\icmlcorrespondingauthor{Amir Asiaee}{amir.asiaeetaheri@vumc.org}

\vskip 0.3in
]

\printAffiliationsAndNotice{}

\begin{abstract}
Machine learning practitioners frequently observe tension between predictive accuracy and group fairness constraints---yet sometimes fairness interventions appear to \emph{improve} accuracy. We show that both phenomena can be artifacts of training data that misrepresents subgroup proportions. Under subpopulation shift (stable within-group distributions, shifted group proportions), we establish: (i)~full importance-weighted correction is asymptotically unbiased but finite-sample suboptimal; (ii)~the optimal finite-sample correction is a \emph{shrinkage} reweighting that interpolates between target and training mixtures; (iii)~apparent ``fairness helps accuracy'' can arise from comparing fairness methods to an improperly-weighted baseline. We provide an actionable evaluation protocol: \emph{fix representation (optimally) before fairness}---compare fairness interventions against a shrinkage-corrected baseline to isolate the true, irreducible price of fairness. Experiments on synthetic and real-world benchmarks (Adult, COMPAS) validate our theoretical predictions and demonstrate that this protocol eliminates spurious tradeoffs, revealing the genuine fairness-utility frontier.
\end{abstract}

\section{Introduction}
\label{sec:intro}

Algorithmic fairness research has documented persistent tension between predictive utility and statistical fairness constraints such as demographic parity or equalized odds \citep{hardt2016equality,kleinberg2017inherent,chouldechova2017fair}. Practitioners face difficult choices: imposing fairness constraints often reduces accuracy. Yet many deployed systems operate in settings where training data systematically underrepresents certain groups relative to deployment.

Consider a healthcare risk model trained on data from academic medical centers, where a minority group comprises 10\% of training data but 30\% of the target deployment region. Standard ERM implicitly optimizes for the training mixture, underweighting the minority group. A fairness constraint that equalizes group metrics may appear to ``help'' accuracy---not because fairness is inherently beneficial, but because it accidentally compensates for population mismatch.

This motivates our central question:
\begin{quote}
\emph{How much of observed fairness-utility tradeoffs is intrinsic to fairness constraints, and how much is an artifact of optimizing the wrong population objective?}
\end{quote}

We provide a rigorous answer for \emph{subpopulation shift}: within-group distributions are stable but group proportions differ between training and deployment. This captures sampling bias, geographic shifts, and policy-specified target populations \citep{maity2021subpopshift}.

\paragraph{Why this matters for fair ML research.}
Fairness papers routinely compare proposed methods against an ERM baseline. When training data underrepresents the minority group relative to the target population, this baseline optimizes the \emph{wrong objective}. Our analysis shows this can lead to fundamentally misleading conclusions: fairness constraints may appear to ``help'' accuracy (a spurious free lunch) or the cost of fairness may be overestimated. By providing a principled shrinkage-corrected baseline, we enable accurate measurement of fairness--utility tradeoffs, improving the scientific validity of fairness research.

\paragraph{Contributions.}
\begin{enumerate}[leftmargin=*,itemsep=2pt]
  \item \textbf{Bias-variance bound for weighted ERM} (Theorem~\ref{thm:main}). We derive an oracle bound decomposing expected target excess risk into objective-mismatch bias $O((\lambda - \pit)^2)$ and estimation variance $O(\lambda^2/n_1 + (1-\lambda)^2/n_0)$, where $\lambda$ parameterizes training mixture weights.

  \item \textbf{Shrinkage-optimal correction} (Corollary~\ref{cor:shrinkage}). Minimizing the bound yields $\lambda^* = (1-\gamma)\pit + \gamma\hat\pir$ for a data-dependent $\gamma \in (0,1)$. The optimal correction \emph{shrinks} toward the training mixture, reducing variance at the cost of controlled bias---a finite-sample improvement over full importance weighting.

  \item \textbf{Quantitative deconfounding criterion} (Corollary~\ref{cor:deconfound}). We provide an explicit condition under which a fairness method can spuriously appear to beat ERM on target risk, and show this artifact disappears when comparing against the shrinkage-corrected baseline.

  \item \textbf{Experimental validation.} We confirm predictions on synthetic data (the predicted U-shaped risk curve) and real benchmarks (Adult, COMPAS), demonstrating that shrinkage-optimal correction consistently matches or outperforms alternatives and reveals the true fairness-utility frontier.
\end{enumerate}
All proofs appear in Appendix~\ref{app:proofs}; additional experimental details in Appendix~\ref{app:experiments}.

\paragraph{Distinction from related weight-optimization work.}
Recent work has shown that importance weights are not finite-sample optimal under distribution shift \citep{cortes2010learning,holstege2025optimizing}. \citet{holstege2025optimizing} derive optimized weights for subpopulation shift but do not address fairness evaluation. Our distinct contribution is connecting optimal weighting to \emph{fairness evaluation confounds}: we show that failure to optimally correct representation can create spurious fairness-utility conclusions (apparent ``free lunches'' or inflated costs), and provide a practical protocol for deconfounded evaluation that isolates the true price of fairness.

\section{Related Work}
\label{sec:related}

\paragraph{Fairness impossibility and tradeoffs.}
Fundamental results show that multiple fairness criteria cannot be simultaneously satisfied unless base rates are equal across groups. \citet{kleinberg2017inherent} proved that calibration, balance for the positive class, and balance for the negative class are mutually incompatible except in degenerate cases. \citet{chouldechova2017fair} showed a related impossibility: predictive parity and equal false positive/negative rates cannot coexist when base rates differ. \citet{pleiss2017fairness} further demonstrated that calibration is compatible with at most one error-rate constraint, and achieving this requires randomizing predictions. \citet{menon2018cost} characterize the cost of fairness in binary classification as the gap between constrained and unconstrained Bayes-optimal risk. Our work is orthogonal: we show that some \emph{apparent} tradeoffs arise from population mismatch and disappear after proper correction, revealing the true irreducible cost.

\paragraph{Can fairness improve accuracy?}
\citet{blum2020recovering} directly address the question ``Can fairness constraints improve accuracy?'' They show that under certain bias models---such as noisier labels for a disadvantaged group or underrepresentation of positive examples---the Equal Opportunity constraint combined with ERM can provably recover the Bayes-optimal classifier. Their setting assumes the bias is in the \emph{labeling process} or \emph{sampling within groups}. Our setting is complementary: we consider subpopulation shift where labels are correct but group \emph{proportions} differ. We show that apparent accuracy improvements from fairness can arise from implicit correction of this mismatch, not from recovering corrupted labels.

\paragraph{Fairness under distribution shift.}
\citet{maity2021subpopshift} study when fairness training improves target performance under subpopulation shift, showing conditions under which fairness constraints can help. \citet{singh2021fairness} analyze how fairness violations change under covariate shift and propose mitigation strategies. \citet{rezaei2021robust} develop distributionally robust methods for maintaining fairness under shift. Our contribution is complementary: rather than developing new fairness methods, we provide a protocol for \emph{evaluating} fairness methods that separates the effect of correcting the target objective from the intrinsic cost of fairness constraints.

\paragraph{Importance weighting and covariate shift.}
Correcting distribution mismatch via importance weighting is classical \citep{shimodaira2000covariate}. \citet{sugiyama2007iwcv} proposed importance-weighted cross-validation for model selection under covariate shift. \citet{cortes2010learning} derived generalization bounds showing that effective sample size---which degrades with weight variance---controls learning rates. \citet{byrd2019effect} demonstrated empirically that importance weighting can hurt deep learning performance when weights have high variance, due to increased gradient noise. Common heuristics include weight clipping \citep{ionides2008truncated} and self-normalization, but these lack principled bias-variance tradeoff analysis. Our shrinkage rule provides a principled alternative with explicit optimality guarantees.

\paragraph{Label shift vs.\ subpopulation shift.}
A related but distinct setting is label shift, where $P(Y)$ changes but $P(X|Y)$ remains stable \citep{lipton2018detecting}. \citet{lipton2018detecting} proposed Black Box Shift Estimation (BBSE) to detect and correct label shift using confusion matrices. Subpopulation shift---where $P(A)$ changes but $P(X,Y|A)$ is stable---has different structure: correction requires group-specific reweighting rather than label-based reweighting. Our analysis specifically addresses subpopulation shift.

\paragraph{Optimal weighting under subpopulation shift.}
Most relevant to our technical approach is \citet{holstege2025optimizing}, who derive finite-sample optimal importance weights under subpopulation shift. They interpret optimal weights as a bias-variance tradeoff and propose bi-level optimization to jointly learn weights and model parameters. Their key insight is that shrinking toward the training distribution reduces variance at controlled bias cost. We extend this insight to the fairness setting: our distinct contribution is demonstrating how suboptimal correction \emph{confounds fairness evaluation}, proposing shrinkage-corrected baselines for fair ML benchmarking, and empirically showing that corrected vs.\ uncorrected comparisons yield qualitatively different conclusions about whether fairness ``helps'' or ``costs'' accuracy.

\paragraph{Shrinkage estimation.}
The observation that unbiased estimators are often suboptimal dates to \citet{stein1956inadmissibility}, who showed that the sample mean is inadmissible for estimating a multivariate normal mean under squared error loss. \citet{james1961estimation} derived the James-Stein estimator, which shrinks toward zero and dominates the MLE. This bias-variance tradeoff principle underlies ridge regression \citep{hoerl1970ridge} and empirical Bayes methods \citep{efron2012large}. \citet{lam2020robust} derive bias-variance tradeoffs specifically for importance weighting. Our shrinkage-optimal correction applies this principle to population correction in the fairness evaluation setting.

\paragraph{Group DRO and robust optimization.}
\citet{sagawa2020distributionally} minimize worst-group risk for robustness to spurious correlations. \citet{hashimoto2018fairness} propose a similar approach for fairness without demographics by minimizing worst-case loss. Our setting differs: we assume the target mixture $\pit$ is known (e.g., from census data or policy specification) and seek average target risk, not worst-case. The two approaches are complementary: DRO is appropriate when the target distribution is uncertain; our protocol applies when target proportions are known but training proportions differ.

\section{Problem Setup}
\label{sec:setup}

\subsection{Learning Problem and Distributions}

Let $(X, Y, A) \in \mathcal{X} \times \mathcal{Y} \times \{0,1\}$ denote features, label, and binary group attribute. Let $\F$ be a hypothesis class and $\loss: \mathcal{Y} \times \mathcal{Y} \to [0,1]$ a bounded loss.

Training data $S = \{(X_i, Y_i, A_i)\}_{i=1}^n \sim \Ptr^n$ come from a \emph{training distribution}, while we care about \emph{target risk}:
\begin{equation}
\Rt(f) = \E_{(X,Y,A) \sim \Pt}[\loss(f(X), Y)].
\end{equation}

\begin{assumption}[Subpopulation shift]
\label{ass:subpop}
The group-conditional distributions are invariant:
\[
\Ptr(X,Y \given A=a) = \Pt(X,Y \given A=a) =: P_a(X,Y).
\]
Only group priors may differ: $\pir_a := \Ptr(A=a)$ vs $\pit_a := \Pt(A=a)$.
\end{assumption}

Under Assumption~\ref{ass:subpop}, risks decompose as mixtures of group risks $R_a(f) := \E_{P_a}[\loss(f(X),Y)]$:
\begin{equation}
\Rt(f) = (1-\pit) R_0(f) + \pit R_1(f),
\end{equation}
where $\pit := \pit_1$ denotes the target minority proportion.

\subsection{Weighted ERM}

For mixture parameter $\lambda \in [0,1]$, define the $\lambda$-mixture empirical risk:
\begin{equation}
\Rhat_\lambda(f) = (1-\lambda)\Rhat_0(f) + \lambda \Rhat_1(f) + r(f),
\end{equation}
where $\Rhat_a(f) = \frac{1}{n_a}\sum_{i:A_i=a} \loss(f(X_i), Y_i)$ are group empirical risks, $n_a = |\{i: A_i = a\}|$, and $r(f)$ is a regularizer.

The weighted ERM estimator is $\hat f_\lambda \in \arg\min_{f \in \F} \Rhat_\lambda(f)$.

Setting $\lambda = \pit$ gives importance-weighted ERM; setting $\lambda = \hat\pir := n_1/n$ gives standard ERM.

\subsection{Fairness Constraints and Price of Fairness}

\begin{definition}[Demographic parity violation]
\begin{multline*}
\Fair_{\mathrm{DP}}(f; P) = \big|\Prob_P(f(X)=1 \given A=1) \\
- \Prob_P(f(X)=1 \given A=0)\big|.
\end{multline*}
\end{definition}

For tolerance $\varepsilon \geq 0$, the $\varepsilon$-fair feasible set under $\Pt$ is:
\[
\Feps := \{f \in \F : \Fair_{\mathrm{DP}}(f; \Pt) \leq \varepsilon\}.
\]

\begin{definition}[Price of fairness]
\label{def:price}
\[
\Delta_{\mathrm{fair}}(\varepsilon) := \inf_{f \in \Feps} \Rt(f) - \inf_{f \in \F} \Rt(f) \geq 0.
\]
\end{definition}

This measures the \emph{irreducible} accuracy cost of fairness, independent of estimation or mismatch.

\section{Theory: Shrinkage-Optimal Population Correction}
\label{sec:theory}

We now derive our main theoretical results. The analysis proceeds in three steps: bound estimation variance, bound objective mismatch, and combine to identify the optimal shrinkage.

\begin{assumption}[Regularity]
\label{ass:regularity}
The loss $\loss(f, \cdot)$ is convex, $L$-Lipschitz, and $\beta$-smooth in model parameters. The regularizer $r$ is $\mu$-strongly convex with $\mu > 0$.
\end{assumption}

\begin{assumption}[Stratified sampling]
\label{ass:stratified}
Conditioning on group counts $(n_0, n_1)$, data within each group are i.i.d.\ from $P_a$.
\end{assumption}

\subsection{Estimation Variance Scales with Squared Weights}

\begin{lemma}[Estimation error bound]
\label{lem:estimation}
Under Assumptions~\ref{ass:regularity}--\ref{ass:stratified}, the expected excess $\lambda$-mixture risk satisfies:
\begin{equation}
\E[\Risk_\lambda(\hat f_\lambda)] - \Risk_\lambda(f^*_\lambda) \leq \frac{2L^2}{\mu}\left(\frac{\lambda^2}{n_1} + \frac{(1-\lambda)^2}{n_0}\right),
\end{equation}
where $f^*_\lambda = \arg\min_{f \in \F} [\Risk_\lambda(f) + r(f)]$.
\end{lemma}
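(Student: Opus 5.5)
The plan is to work entirely at the point $f^*_\lambda$, the minimizer of the population regularized objective $F_\lambda := \Risk_\lambda + r$. Write $\widehat F_\lambda := \Rhat_\lambda$ (which already contains $r$), and define the gradient noise at the comparator
\[
g := \nabla \widehat F_\lambda(f^*_\lambda) - \nabla F_\lambda(f^*_\lambda) = (1-\lambda)\big(\nabla\Rhat_0 - \nabla R_0\big)(f^*_\lambda) + \lambda\big(\nabla\Rhat_1 - \nabla R_1\big)(f^*_\lambda).
\]
By Assumption~\ref{ass:stratified}, conditional on $(n_0,n_1)$ the two brackets are independent, mean-zero averages of $n_0$ and $n_1$ i.i.d.\ terms. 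Each term has norm at most $2L$ by Assumption~\ref{ass:regularity}, and its variance is at most $L^2$. Hence $\E g = 0$ and
\[
\E\|g\|^2 \le L^2\left(\frac{\lambda^2}{n_1} + \frac{(1-\lambda)^2}{n_0}\right) =: L^2 V_\lambda.
\]
This is the only place the group sizes enter. Everything else reduces the excess risk to $\E\|g\|^2$.

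Second, I control the distance $\hat f_\lambda - f^*_\lambda$. Since $\widehat F_\lambda$ is $\mu$-strongly convex and $\nabla\widehat F_\lambda(\hat f_\lambda)=0$, monotonicity of the gradient gives
\[
\mu\|\hat f_\lambda - f^*_\lambda\|^2 \le \langle \nabla\widehat F_\lambda(f^*_\lambda), f^*_\lambda - \hat f_\lambda\rangle = \langle g, f^*_\lambda - \hat f_\lambda\rangle,
\]
using $\nabla F_\lambda(f^*_\lambda)=0$. So $\|\hat f_\lambda - f^*_\lambda\|\le \|g\|/\mu$, and therefore $\E\|\hat f_\lambda - f^*_\lambda\|^2 \le L^2 V_\lambda/\mu^2$.

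Third, I pass to the \emph{unregularized} excess $\Risk_\lambda(\hat f_\lambda)-\Risk_\lambda(f^*_\lambda)$, which is what the statement asks for. By $\beta$-smoothness of $\Risk_\lambda$,
\[
\Risk_\lambda(\hat f_\lambda)-\Risk_\lambda(f^*_\lambda) \le \langle \nabla\Risk_\lambda(f^*_\lambda), \hat f_\lambda - f^*_\lambda\rangle + \tfrac{\beta}{2}\|\hat f_\lambda - f^*_\lambda\|^2 .
\]
The quadratic term is already $O(V_\lambda)$ by the second step. For the linear term, I use the first-order condition $\nabla\Risk_\lambda(f^*_\lambda) = -\nabla r(f^*_\lambda)$. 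This turns the linear term into $-\langle \nabla r(f^*_\lambda), \hat f_\lambda - f^*_\lambda\rangle$, whose expectation depends only on the bias $\E\hat f_\lambda - f^*_\lambda$.

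To show this bias is second order, I linearize the empirical first-order condition around $f^*_\lambda$. Writing $H := \nabla^2 F_\lambda(f^*_\lambda) \succeq \mu I$, one gets
\[
\hat f_\lambda - f^*_\lambda = -H^{-1} g + \rho, \qquad \|\rho\| \lesssim \frac{\beta}{\mu}\Big(\|g\|\,\|\hat f_\lambda - f^*_\lambda\| + \|\hat f_\lambda - f^*_\lambda\|^2\Big).
\]
Here the $\|g\|\,\|\hat f_\lambda - f^*_\lambda\|$ piece comes from the empirical Hessian fluctuating around $H$, and the quadratic piece from Hessian variation. Since $\E g=0$, the mean of the leading term vanishes, and $\E\|\rho\| = O(\E\|g\|^2/\mu^2)$, which is again of order $V_\lambda$.

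Collecting the three pieces gives a bound $C\,V_\lambda$. I would then organize the constants so that the dominant contribution is the stability-type quantity $2L^2V_\lambda/\mu$. The factor $2$ absorbs the cross terms after using $\|\nabla\Risk_\lambda(f^*_\lambda)\|\le L$ and the bound $\|\hat f_\lambda - f^*_\lambda\|\le\|g\|/\mu$.

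The main obstacle is this last linear term. A crude Cauchy--Schwarz bound on it only yields $O(\sqrt{V_\lambda})$. Obtaining the stated $O(V_\lambda)$ rate needs the cancellation $\E g=0$ together with a second-order (smoothness-based) control of the remainder $\rho$. That is exactly where the $\beta$-smoothness in Assumption~\ref{ass:regularity} is used, and where I expect the bookkeeping of constants to be most delicate.
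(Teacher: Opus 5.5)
There is a genuine gap, and it is where you expected trouble: Step 4, the linearization. Your remainder bound $\|\rho\| \lesssim \frac{\beta}{\mu}\bigl(\|g\|\,\|\hat f_\lambda - f^*_\lambda\| + \|\hat f_\lambda - f^*_\lambda\|^2\bigr)$ does not follow from Assumption~\ref{ass:regularity}, for two reasons.
\begin{itemize}
\item $\beta$-smoothness bounds Hessians but does not make them Lipschitz. The Hessian-variation piece is therefore only $O(\beta\|\hat f_\lambda - f^*_\lambda\|)$, which is first order.
\item The fluctuation $\nabla^2\widehat F_\lambda(f^*_\lambda) - H$ is a separate random matrix, and $\|g\|$ does not control it.
\end{itemize}
The failure is real. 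Take a scalar parameter on a bounded range and $\loss(\theta,z) = h(\theta) - z\theta$, with $z$ bounded with mean $m$. Let $h$ be convex and $C^{1,1}$ with curvature jumping from $a_-$ to $a_+ > a_-$ at $\theta^* > 0$, and let $r(\theta) = \frac{\mu}{2}\theta^2$. Let $\bar z$ be the weighted sample mean. Then $\hat\theta - \theta^* = (\bar z - m)/(a_\pm + \mu)$, with the sign of $\bar z - m$ choosing $a_+$ or $a_-$. Hence $\E\hat\theta - \theta^* = \E(\bar z - m)_+\bigl[(a_+ + \mu)^{-1} - (a_- + \mu)^{-1}\bigr]$, which is negative and of size of order $\sqrt{V_\lambda}$. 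Your linear term $-\mu\theta^*(\E\hat\theta - \theta^*)$ is then positive and of the same order. If you want $H$ to exist, smooth the curvature jump over a width much smaller than $\sqrt{V_\lambda}$; this keeps $\beta$ bounded and does not change the conclusion. So under the stated assumptions the \emph{unregularized} excess $\Risk_\lambda(\hat f_\lambda) - \Risk_\lambda(f^*_\lambda)$ can be $\Theta(\sqrt{V_\lambda})$, and no bookkeeping will give $O(V_\lambda)$ for it. Even if you add a Lipschitz-Hessian assumption, the constant is not $2L^2/\mu$. Your quadratic term alone is $\frac{\beta L^2}{2\mu^2}V_\lambda$, which already exceeds $\frac{2L^2}{\mu}V_\lambda$ once $\beta > 4\mu$.

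The missing idea is to bound the regularized objective $F_\lambda = \Risk_\lambda + r$. This is what the paper's proof controls, and what Theorem~\ref{thm:main} actually uses: its strong-convexity step needs the objective that $f^*_\lambda$ minimizes. The paper gets this from uniform stability of weighted regularized ERM:
\begin{itemize}
\item Replacing sample $i$ by an independent copy from the same group perturbs $\widehat F_\lambda$ by a $2Lw_i$-Lipschitz function.
\item By strong convexity, the minimizer moves by at most $2Lw_i/\mu$, so each loss value changes by at most $2L^2w_i/\mu$.
\item Exchangeability within groups (Assumption~\ref{ass:stratified}) then bounds the expected generalization gap by $\sum_i w_i\cdot 2L^2w_i/\mu$.
\item Finally, $\widehat F_\lambda(\hat f_\lambda) \le \widehat F_\lambda(f^*_\lambda)$ and $\E\widehat F_\lambda(f^*_\lambda) = F_\lambda(f^*_\lambda)$ give the bound.
\end{itemize}
This yields exactly the constant $2L^2/\mu$, with no smoothness, first-order conditions, or linearization. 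If you prefer your own route, your Steps 1--2 already handle the regularized excess. Since $\nabla F_\lambda(f^*_\lambda) = 0$, there is no linear term, and smoothness of $F_\lambda$ (constant $\beta_F$) gives $\E[F_\lambda(\hat f_\lambda)] - F_\lambda(f^*_\lambda) \le \frac{\beta_F L^2}{2\mu^2}V_\lambda$. That has the same $\lambda^2/n_1 + (1-\lambda)^2/n_0$ dependence, but a $\beta/\mu^2$-type constant instead of $2/\mu$, and it needs $r$ to be smooth as well.
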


\begin{proof}[Proof sketch]
The weighted empirical risk has per-sample weights $w_i = \lambda/n_1$ for $A_i=1$ and $(1-\lambda)/n_0$ for $A_i=0$. By stability analysis for strongly convex objectives, excess risk scales with $\sum_i w_i^2 = \lambda^2/n_1 + (1-\lambda)^2/n_0$. Full proof in Appendix~\ref{app:proofs}.
\end{proof}

\subsection{Objective Mismatch Induces Quadratic Bias}

Even with infinite data, optimizing $\Risk_\lambda$ instead of $\Risk_{\pit}$ selects a suboptimal parameter.

\begin{lemma}[Objective mismatch bound]
\label{lem:mismatch}
Let $G := \|\nabla R_1(f^*_{\pit}) - \nabla R_0(f^*_{\pit})\|$ measure the gradient divergence between groups at the target optimum. Under Assumption~\ref{ass:regularity}:
\begin{equation}
\Risk_{\pit}(f^*_\lambda) - \Risk_{\pit}(f^*_{\pit}) \leq \frac{\beta G^2}{2\mu^2}(\lambda - \pit)^2.
\end{equation}
\end{lemma}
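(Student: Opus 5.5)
We argue in parameter space, identifying $f$ with its parameter $\theta$, and write $J_\lambda(f) := \Risk_\lambda(f) + r(f)$ for the regularized population objective, with $\Risk_\lambda = (1-\lambda)R_0 + \lambda R_1$. By Assumption~\ref{ass:regularity}, $J_\lambda$ is $\mu$-strongly convex, so $f^*_\lambda$ is unique and satisfies the first-order condition $\nabla J_\lambda(f^*_\lambda)=0$. The plan has two parts. First, bound the parameter displacement $\|f^*_\lambda - f^*_{\pit}\|$ linearly in $|\lambda-\pit|$ with constant $G/\mu$. Second, convert this into a risk gap via $\beta$-smoothness, using a Taylor expansion around $f^*_{\pit}$. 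We interpret the statement as comparing the regularized objective $J_{\pit}$, since $f^*_{\pit}$ minimizes $J_{\pit}$ and not $\Risk_{\pit}$ alone. The key consequence is that the first-order term vanishes at $f^*_{\pit}$. If $r$ were excluded from the comparison, an extra linear term $\langle \nabla r(f^*_{\pit}), \cdot\rangle$ would appear, and I would note this as a needed convention.

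\emph{Step 1 (displacement).} The objectives differ by a linear combination of the group risks:
\[
J_\lambda = J_{\pit} + (\lambda-\pit)(R_1-R_0).
\]
Evaluating the gradient at $f^*_{\pit}$ gives
\[
\nabla J_\lambda(f^*_{\pit}) = (\lambda-\pit)\big(\nabla R_1(f^*_{\pit})-\nabla R_0(f^*_{\pit})\big),
\]
whose norm is $|\lambda-\pit|\,G$. Strong convexity of $J_\lambda$ yields
\[
\mu\|f - f^*_\lambda\| \le \|\nabla J_\lambda(f) - \nabla J_\lambda(f^*_\lambda)\| = \|\nabla J_\lambda(f)\|.
\]
Applying this at $f=f^*_{\pit}$ gives $\|f^*_\lambda - f^*_{\pit}\| \le G|\lambda-\pit|/\mu$. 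This step uses $G$ exactly at the target optimum, as in the statement, so no uniform gradient bound is needed.

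\emph{Step 2 (risk gap).} The loss is $\beta$-smooth, so $\Risk_{\pit}$ is $\beta$-smooth. I would also take $r$ to be smooth, or absorb it into the constant $\beta$. Then
\[
J_{\pit}(f^*_\lambda) - J_{\pit}(f^*_{\pit}) \le \langle \nabla J_{\pit}(f^*_{\pit}), f^*_\lambda-f^*_{\pit}\rangle + \tfrac{\beta}{2}\|f^*_\lambda-f^*_{\pit}\|^2 .
\]
The inner product vanishes by optimality. Substituting the bound from Step 1 gives $\frac{\beta G^2}{2\mu^2}(\lambda-\pit)^2$.

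\emph{Main obstacle.} The delicate point is the bookkeeping between the regularized and unregularized risk, together with the smoothness constant of $r$. The cleanest route is to state and prove the bound for $J_{\pit}$, which is consistent with how $f^*_\lambda$ is defined in Lemma~\ref{lem:estimation}. If the unregularized version is required, I would first try assuming $\nabla r(f^*_{\pit})$ is negligible or that $r$ is a quadratic ridge penalty with its curvature included in $\beta$. Failing that, I would accept an additional term of size $\|\nabla r(f^*_{\pit})\|\,G|\lambda-\pit|/\mu$.
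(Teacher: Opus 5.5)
Your proposal is correct and follows the paper's proof step for step: the identity $\nabla[\Risk_\lambda + r](f^*_{\pit}) = (\lambda-\pit)\big(\nabla R_1(f^*_{\pit}) - \nabla R_0(f^*_{\pit})\big)$, strong convexity to get $\|f^*_\lambda - f^*_{\pit}\| \le G|\lambda-\pit|/\mu$, and $\beta$-smoothness with the first-order term removed by optimality. The paper handles your bookkeeping concern the same way you do, without saying so: its proof bounds $[\Risk_{\pit}+r](f^*_\lambda) - [\Risk_{\pit}+r](f^*_{\pit})$ and takes $\beta$ to be the smoothness constant of $\Risk_{\pit}+r$, which is your $J_{\pit}$ convention (and your caveat that the unregularized risk would pick up an extra linear term is well taken).
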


\begin{proof}[Proof sketch]
The gradient of the $\lambda$-objective at $f^*_{\pit}$ has norm $|\lambda - \pit| \cdot G$. Strong convexity bounds the parameter distance, and smoothness converts this to risk difference. Full proof in Appendix~\ref{app:proofs}.
\end{proof}

\begin{remark}[When $G = 0$]
\label{rem:Gzero}
When $G = 0$, the group gradients coincide at the target optimum, meaning $f^*_\lambda = f^*_{\pit}$ for all $\lambda$---there is no mismatch bias. This occurs when both groups have identical optimal predictors. In such cases, the optimal $\lambda$ minimizes variance only, yielding $\lambda^* = \hat\pir$ (i.e., ERM is optimal). Nontrivial shrinkage requires $G > 0$.
\end{remark}

\subsection{Main Theorem: Bias-Variance Bound}

Combining the two lemmas yields our main result.

\begin{theorem}[Bias-variance bound for target risk]
\label{thm:main}
Under Assumptions~\ref{ass:subpop}--\ref{ass:stratified}, for any $\lambda \in [0,1]$:
\begin{multline}
\E[\Rt(\hat f_\lambda)] - \Rt(f^*_{\pit}) \leq \\
C_{\mathrm{bias}}(\lambda - \pit)^2 + C_{\mathrm{var}}\left(\frac{\lambda^2}{n_1} + \frac{(1-\lambda)^2}{n_0}\right),
\label{eq:main_bound}
\end{multline}
where $C_{\mathrm{bias}} = \frac{\beta G^2}{2\mu^2}$ and $C_{\mathrm{var}} = \frac{2\beta L^2}{\mu^2}$.
\end{theorem}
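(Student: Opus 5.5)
The plan is to decompose the target excess risk through the intermediate population optimum $f^*_\lambda$ and to control each piece with one of the two lemmas. Throughout, we identify $\Rt$ with $\Risk_{\pit}$, which is justified by Assumption~\ref{ass:subpop}. Write
\[
\E[\Rt(\hat f_\lambda)] - \Rt(f^*_{\pit}) = \big(\E[\Rt(\hat f_\lambda)] - \Rt(f^*_\lambda)\big) + \big(\Rt(f^*_\lambda) - \Rt(f^*_{\pit})\big).
\]
The second bracket is exactly the quantity bounded by Lemma~\ref{lem:mismatch}. That bound is $\frac{\beta G^2}{2\mu^2}(\lambda-\pit)^2$, which gives the $C_{\mathrm{bias}}$ term directly.

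The first bracket is an estimation term, but it is measured in the \emph{target} risk rather than in $\Risk_\lambda$, so Lemma~\ref{lem:estimation} cannot be applied verbatim. I will pass through parameter distance. Let $\theta$ denote model parameters and let $F_\lambda := \Risk_\lambda + r$, which is $\mu$-strongly convex. Strong convexity of the regularized objective gives
\[
\|\hat\theta_\lambda - \theta^*_\lambda\|^2 \le \frac{2}{\mu}\big(F_\lambda(\hat\theta_\lambda) - F_\lambda(\theta^*_\lambda)\big).
\]
The stability argument underlying Lemma~\ref{lem:estimation} then bounds the expectation of the right-hand side. Alternatively, one can run the stability argument directly: the empirical objective is $\mu$-strongly convex, and its gradient perturbation has variance at most $L^2\sum_i w_i^2$. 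Either route should yield
\[
\E\|\hat\theta_\lambda - \theta^*_\lambda\|^2 \lesssim \frac{4L^2}{\mu^2}\Big(\frac{\lambda^2}{n_1}+\frac{(1-\lambda)^2}{n_0}\Big).
\]

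Next, I convert this parameter error into target excess risk. By $\beta$-smoothness of $\Rt$,
\[
\Rt(\hat\theta_\lambda) - \Rt(\theta^*_\lambda) \le \langle \nabla \Rt(\theta^*_\lambda), \hat\theta_\lambda-\theta^*_\lambda\rangle + \frac{\beta}{2}\|\hat\theta_\lambda-\theta^*_\lambda\|^2.
\]
The quadratic term produces $\frac{\beta}{2}\cdot\frac{4L^2}{\mu^2}(\cdots) = \frac{2\beta L^2}{\mu^2}(\cdots)$, which matches $C_{\mathrm{var}}$.

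The main obstacle is the linear term. Its coefficient $\nabla\Rt(\theta^*_\lambda)$ is not zero, because $\theta^*_\lambda$ is not the target optimum. I will handle it in two ways. First, if $\hat\theta_\lambda$ is unbiased to first order ($\E\hat\theta_\lambda\approx\theta^*_\lambda$, as in the linearized/M-estimation regime), the expected linear term vanishes. Second, in general, Cauchy--Schwarz together with $\|\nabla\Rt(\theta^*_\lambda)\| = \|\nabla\Rt(\theta^*_\lambda)-\nabla\Rt(\theta^*_{\pit})\| \le \beta\|\theta^*_\lambda-\theta^*_{\pit}\| \le \frac{\beta G}{\mu}|\lambda-\pit|$ bounds it by a cross term of the form $\frac{\beta G}{\mu}|\lambda-\pit|\cdot\sqrt{\text{var}}$. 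Applying AM--GM to this cross term splits it into bias and variance pieces, which are absorbed into the two terms of \eqref{eq:main_bound}. This absorption changes the constants by at most a factor of two. If exact constants are required, I would adopt the first route and state the first-order unbiasedness as the implicit regime of the oracle bound.

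Finally, I take expectations over the stratified sample (Assumption~\ref{ass:stratified}), conditioning on $(n_0,n_1)$. Summing the two brackets then gives \eqref{eq:main_bound} with $C_{\mathrm{bias}}=\frac{\beta G^2}{2\mu^2}$ and $C_{\mathrm{var}}=\frac{2\beta L^2}{\mu^2}$.
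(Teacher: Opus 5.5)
Your decomposition through $f^*_\lambda$ is the paper's proof, and so is the rest of your chain: Lemma~\ref{lem:mismatch} for the bias, then strong convexity, Lemma~\ref{lem:estimation} and $\beta$-smoothness for the estimation term, giving $\frac{\beta}{2}\cdot\frac{2}{\mu}\cdot\frac{2L^2}{\mu}=C_{\mathrm{var}}$. The obstacle you flag is genuine, and the paper does not resolve it. Its first inequality $\E[\Rt(\hat f_\lambda)-\Rt(f^*_\lambda)]\le\frac{\beta}{2}\E\|\hat f_\lambda-f^*_\lambda\|^2$ simply discards $\langle\nabla\Rt(f^*_\lambda),\hat f_\lambda-f^*_\lambda\rangle$. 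That step requires $\nabla\Rt(f^*_\lambda)=0$, which fails because $f^*_\lambda$ minimizes $\Risk_\lambda+r$, not $\Rt$. So neither argument delivers the stated constants. Your route~1 is a hypothesis outside Assumptions~\ref{ass:subpop}--\ref{ass:stratified}. Even first-order unbiasedness only makes the expected linear term of order $1/n$, not zero, since regularized M-estimators are biased at that order in general. Route~2 yields at best $2C_{\mathrm{bias}}$ and $2C_{\mathrm{var}}$, as you say.

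Route~2 also contains a concrete error. $\theta^*_{\pit}$ minimizes $\Rt+r$, not $\Rt$, so $\nabla\Rt(\theta^*_{\pit})=-\nabla r(\theta^*_{\pit})$, which is nonzero in general; for a ridge penalty it equals $-\mu\theta^*_{\pit}$. Hence $\|\nabla\Rt(\theta^*_\lambda)\|\le\frac{\beta G}{\mu}|\lambda-\pit|$ does not follow, and the leftover piece does not vanish as $\lambda\to\pit$.

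The repair is to measure target excess risk in $F_{\pit}:=\Rt+r$, which is what the paper's proof of Lemma~\ref{lem:mismatch} actually bounds. Then expand around the stationary point of $F_{\pit}$ rather than around $\theta^*_\lambda$. With $\beta'$ the smoothness constant of $F_{\pit}$,
\[
\E[F_{\pit}(\hat\theta_\lambda)]-F_{\pit}(\theta^*_{\pit})\le\tfrac{\beta'}{2}\E\|\hat\theta_\lambda-\theta^*_{\pit}\|^2\le\beta'\E\|\hat\theta_\lambda-\theta^*_\lambda\|^2+\beta'\|\theta^*_\lambda-\theta^*_{\pit}\|^2 .
\]
Combining this with $\|\theta^*_\lambda-\theta^*_{\pit}\|\le\frac{G}{\mu}|\lambda-\pit|$ and your parameter bound gives the factor-two version, with no cross term at all.

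Your direct gradient-perturbation argument is sharper than you wrote. It gives $\E\|\hat\theta_\lambda-\theta^*_\lambda\|^2\le\frac{L^2}{\mu^2}\big(\frac{\lambda^2}{n_1}+\frac{(1-\lambda)^2}{n_0}\big)$, four times better. It also does not need Lemma~\ref{lem:estimation} to hold for the regularized excess $F_\lambda(\hat\theta_\lambda)-F_\lambda(\theta^*_\lambda)$, which your strong-convexity route silently requires. Using Young's inequality $\|a+b\|^2\le 4\|a\|^2+\tfrac43\|b\|^2$ in the display then recovers $C_{\mathrm{var}}$ exactly, at the cost of $\tfrac43 C_{\mathrm{bias}}$ (with $\beta'$ in place of $\beta$).

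With the unregularized $\Rt$ on the left-hand side, by contrast, the term $-\langle\nabla r(\theta^*_{\pit}),\theta^*_\lambda-\theta^*_{\pit}\rangle$ is first order in $\lambda-\pit$. It is generically positive on one side of $\pit$, so it cannot be absorbed into any quadratic bias term. The theorem has to be read for the regularized target objective.
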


\begin{proof}
Add and subtract $\Rt(f^*_\lambda)$. Apply Lemma~\ref{lem:mismatch} to the bias term and Lemma~\ref{lem:estimation} plus smoothness to the estimation term. See Appendix~\ref{app:proofs} for details.
\end{proof}

\subsection{Shrinkage-Optimal Mixture Weight}

The bound \eqref{eq:main_bound} is quadratic in $\lambda$, yielding a closed-form minimizer.

\begin{corollary}[Shrinkage optimizer]
\label{cor:shrinkage}
The bound-minimizing mixture weight is:
\begin{equation}
\lambda^* = (1-\gamma)\pit + \gamma\hat\pir, \quad \gamma = \frac{b}{a+b},
\end{equation}
where $a = C_{\mathrm{bias}}$, $b = C_{\mathrm{var}}(1/n_0 + 1/n_1)$, and $\hat\pir = n_1/n$.

The optimal $\lambda^*$ \emph{shrinks} from the target mixture toward the training mixture, with shrinkage intensity $\gamma$ increasing when:
\begin{itemize}[leftmargin=*,itemsep=1pt]
  \item The minority group is small (large $1/n_1$);
  \item The groups are similar (small $G$, hence small $C_{\mathrm{bias}}$).
\end{itemize}
\end{corollary}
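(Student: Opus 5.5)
The bound \eqref{eq:main_bound} in Theorem~\ref{thm:main} is an explicit scalar function of $\lambda$. I would minimize it directly over $\lambda\in[0,1]$ and then rewrite the stationary point in the stated convex-combination form. Write
\[
B(\lambda) = a(\lambda-\pit)^2 + C_{\mathrm{var}}\Big(\tfrac{\lambda^2}{n_1} + \tfrac{(1-\lambda)^2}{n_0}\Big),
\]
with $a = C_{\mathrm{bias}}\ge 0$ and $C_{\mathrm{var}}>0$; we may assume $L>0$, otherwise the loss is constant and the problem is trivial. Then $B''(\lambda) = 2(a+b)>0$ with $b = C_{\mathrm{var}}(1/n_0+1/n_1)$. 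So $B$ is strictly convex, and its unique stationary point is its global minimizer on $\mathbb{R}$.

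The first step is the first-order condition. Setting $B'(\lambda)=0$ gives
\[
a(\lambda-\pit) + C_{\mathrm{var}}\Big(\tfrac{\lambda}{n_1} - \tfrac{1-\lambda}{n_0}\Big) = 0,
\]
and hence
\[
\lambda^* = \frac{a\pit + C_{\mathrm{var}}/n_0}{a+b}.
\]
The key identity, which is the only step needing care, is
\[
\frac{C_{\mathrm{var}}/n_0}{b} = \frac{1/n_0}{1/n_0+1/n_1} = \frac{n_1}{n_0+n_1} = \hat\pir.
\]
This rewrites the numerator as $a\pit + b\hat\pir$. It yields $\lambda^* = (1-\gamma)\pit + \gamma\hat\pir$ with $\gamma = b/(a+b)\in(0,1]$. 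Since $\lambda^*$ is a convex combination of $\pit,\hat\pir\in[0,1]$, it already lies in $[0,1]$. Therefore the unconstrained minimizer is also the constrained one, and no boundary case analysis is needed. I would add that when $G=0$ we get $a=0$ and $\gamma=1$, which recovers $\lambda^*=\hat\pir$ as in Remark~\ref{rem:Gzero}.

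For the qualitative claims, I would note that $\gamma = b/(a+b) = 1/(1+a/b)$ is increasing in $b$ and decreasing in $a$. The quantity $b$ grows with $1/n_1$, so a small minority group increases $\gamma$. The quantity $a = \beta G^2/(2\mu^2)$ shrinks with $G$, so similar groups also increase $\gamma$. The main subtlety, rather than any real obstacle, is stating these monotonicity claims precisely. They concern $\gamma$ with the other quantities held fixed, namely $n_0$ when varying $n_1$, and $\beta,\mu$ when varying $G$. Note that changing $n_1$ also moves $\hat\pir$, the point toward which we shrink, but that does not affect the monotonicity of the shrinkage intensity $\gamma$ itself.
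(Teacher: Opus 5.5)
Your proof is correct and is essentially the paper's argument. The paper completes the square to rewrite $C_{\mathrm{var}}\bigl(\lambda^2/n_1+(1-\lambda)^2/n_0\bigr)$ as $b(\lambda-\hat\pir)^2+\mathrm{const}$, then minimizes $a(\lambda-\pit)^2+b(\lambda-\hat\pir)^2$; this is the same computation as your first-order condition combined with your identity $C_{\mathrm{var}}/n_0=b\,\hat\pir$. Your extra checks are also correct and fill in details the paper leaves implicit: that $\lambda^*\in[0,1]$ (so the constraint is inactive), the $G=0$ case giving $\gamma=1$, and the ceteris-paribus monotonicity of $\gamma=1/(1+a/b)$.
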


\begin{proof}
The variance term equals $(1/n_0 + 1/n_1)(\lambda - \hat\pir)^2 + \mathrm{const}$ by completing the square. Minimizing $a(\lambda - \pit)^2 + b(\lambda - \hat\pir)^2$ yields the stated convex combination.
\end{proof}

\begin{remark}[Asymptotic optimality of full correction]
As $n_0, n_1 \to \infty$, we have $\gamma \to 0$ and $\lambda^* \to \pit$. Full importance weighting is asymptotically optimal, but suboptimal in finite samples.
\end{remark}

\begin{remark}[Connection to James-Stein shrinkage]
\label{rem:stein}
The structure parallels James-Stein estimation: the MLE is inadmissible when estimating multiple means; shrinking toward a common value reduces total risk. Here, shrinking toward the training mixture reduces variance at the cost of controlled bias.
\end{remark}

\section{Fairness-Utility Deconfounding}
\label{sec:deconfound}

We now show how population mismatch can confound fairness evaluation.

\subsection{When ``Fairness Helps'' Is an Artifact}

Consider comparing a fairness-constrained classifier $\hat f_{\mathrm{fair}} \in \Feps$ to a standard ERM baseline $\hat f_{\mathrm{ERM}}$. Under population mismatch:

\begin{proposition}[Decomposition of observed gaps]
\label{prop:decomp}
For any fair predictor $\hat f_{\mathrm{fair}} \in \Feps$ and ERM baseline $\hat f_{\mathrm{ERM}}$:
\begin{align}
&\Rt(\hat f_{\mathrm{fair}}) - \Rt(\hat f_{\mathrm{ERM}}) \nonumber\\
&= \underbrace{\Rt(\hat f_{\mathrm{fair}}) - \Rt(f^*_{\pit,\varepsilon})}_{\text{fair estimation gap}}
+ \underbrace{\Delta_{\mathrm{fair}}(\varepsilon)}_{\text{price of fairness}} \nonumber\\
&\quad + \underbrace{\Rt(f^*_{\pit}) - \Rt(\hat f_{\mathrm{ERM}})}_{\text{baseline suboptimality}}. \nonumber
\end{align}
\end{proposition}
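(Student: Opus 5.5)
This identity is a telescoping sum, so the plan is to insert the two population-level reference points and check that everything cancels. The objects to fix first are $f^*_{\pit,\varepsilon}$, which I will take to be a minimizer of $\Rt$ over $\Feps$ (so that $\Rt(f^*_{\pit,\varepsilon}) = \inf_{f\in\Feps}\Rt(f)$), and $f^*_{\pit}$, a minimizer of $\Rt$ over $\F$ (so that $\Rt(f^*_{\pit}) = \inf_{f\in\F}\Rt(f)$).

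Two conventions need to be pinned down before the algebra.

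\begin{itemize}
\item Lemma~\ref{lem:estimation} defines $f^*_\lambda$ as the minimizer of the regularized objective. The decomposition instead requires $f^*_{\pit}$ to attain the unregularized infimum appearing in Definition~\ref{def:price}. I will state explicitly that here $f^*_{\pit}$ and $f^*_{\pit,\varepsilon}$ denote exact minimizers of $\Rt$ over $\F$ and $\Feps$ respectively.
\item If these infima are not attained, the same argument works by writing $\inf_{f\in\Feps}\Rt(f)$ and $\inf_{f\in\F}\Rt(f)$ directly in place of $\Rt(f^*_{\pit,\varepsilon})$ and $\Rt(f^*_{\pit})$.
\end{itemize}

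With these conventions, Definition~\ref{def:price} gives
\[
\Delta_{\mathrm{fair}}(\varepsilon)=\Rt(f^*_{\pit,\varepsilon})-\Rt(f^*_{\pit}).
\]
Substitute this into the right-hand side of the proposition and add the three terms:
\[
\bigl[\Rt(\hat f_{\mathrm{fair}})-\Rt(f^*_{\pit,\varepsilon})\bigr]+\bigl[\Rt(f^*_{\pit,\varepsilon})-\Rt(f^*_{\pit})\bigr]+\bigl[\Rt(f^*_{\pit})-\Rt(\hat f_{\mathrm{ERM}})\bigr].
\]
The intermediate quantities cancel pairwise, leaving exactly $\Rt(\hat f_{\mathrm{fair}})-\Rt(\hat f_{\mathrm{ERM}})$. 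All values are finite because $\loss\in[0,1]$, so the rearrangement is valid.

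There is no real obstacle; the only care needed is consistency of notation. To make the result informative, I would also record the signs of the terms:

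\begin{itemize}
\item \textbf{Fair estimation gap:} nonnegative, because $\hat f_{\mathrm{fair}}\in\Feps$.
\item \textbf{Price of fairness:} nonnegative, because $\Feps\subseteq\F$.
\item \textbf{Baseline suboptimality:} nonpositive, and it can be large in magnitude when the ERM baseline uses $\lambda=\hat\pir$ far from $\pit$. Theorem~\ref{thm:main} quantifies this.
\end{itemize}

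Consequently, a negative observed gap (``fairness helps'') can arise only when the baseline suboptimality outweighs the two nonnegative terms. This is the observation that Corollary~\ref{cor:deconfound} will exploit.
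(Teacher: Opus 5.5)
Your proof is correct and matches the paper's approach. The paper gives no separate proof of this proposition and treats it as the telescoping identity you write out: add and subtract $\Rt(f^*_{\pit,\varepsilon})$ and $\Rt(f^*_{\pit})$, then invoke Definition~\ref{def:price}. Your convention that $f^*_{\pit}$ and $f^*_{\pit,\varepsilon}$ are unregularized minimizers of $\Rt$ over $\F$ and $\Feps$ is needed for the middle term to equal $\Delta_{\mathrm{fair}}(\varepsilon)$. Under the regularized $f^*_\lambda$ of Lemma~\ref{lem:estimation}, with $\mu>0$, the identity would generally fail, and the paper leaves this point unaddressed.
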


The third term---baseline suboptimality---can be \emph{negative} when $\hat f_{\mathrm{ERM}}$ is trained on the wrong mixture. A fairness constraint that accidentally moves the model toward the target optimum can appear to ``help'' accuracy, even though the improvement comes from implicit population correction, not from fairness per se.

\subsection{Quantitative Deconfounding Criterion}

We now make precise when spurious ``free lunches'' can occur.

\begin{corollary}[Deconfounding criterion]
\label{cor:deconfound}
Suppose a fairness method induces an effective mixture weight $\lambda_{\mathrm{fair}}$ (possibly implicitly through its constraint structure). The fairness method can appear to beat ERM on target risk when:
\begin{equation}
|\lambda_{\mathrm{fair}} - \pit| < |\hat\pir - \pit| - \sqrt{\frac{\Delta_{\mathrm{fair}} \cdot 2\mu^2}{\beta G^2}},
\label{eq:deconfound_condition}
\end{equation}
i.e., when the fairness method's implicit correction brings $\lambda$ closer to $\pit$ by more than the accuracy cost of the fairness constraint.

When comparing against the shrinkage-optimal baseline $\hat f_{\lambda^*}$ instead of $\hat f_{\mathrm{ERM}}$:
\begin{enumerate}[leftmargin=*,itemsep=1pt]
  \item The baseline already achieves (approximately) optimal finite-sample target risk;
  \item Any remaining accuracy gap $\Rt(\hat f_{\mathrm{fair}}) - \Rt(\hat f_{\lambda^*})$ reflects the true price of fairness plus estimation noise;
  \item Spurious ``free lunches'' from implicit mixture correction are eliminated.
\end{enumerate}
\end{corollary}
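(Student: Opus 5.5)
The plan is to reduce the comparison to the population-level (bias) part of Theorem~\ref{thm:main}, viewing the fairness method as a weighted ERM with effective weight $\lambda_{\mathrm{fair}}$ whose constraint adds an extra cost of at most $\Delta_{\mathrm{fair}}$. Write $a = C_{\mathrm{bias}} = \beta G^2/(2\mu^2)$.

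First, model the fair predictor's target excess risk. Relative to $f^*_{\pit}$, it is the mismatch of its implicit mixture plus the price of fairness. By Lemma~\ref{lem:mismatch} this gives
\[
\Rt(\hat f_{\mathrm{fair}}) - \Rt(f^*_{\pit}) \lesssim a(\lambda_{\mathrm{fair}}-\pit)^2 + \Delta_{\mathrm{fair}},
\]
ignoring estimation terms or treating them as comparable across methods. The ERM baseline sits at $\lambda = \hat\pir$. Its excess target risk is of order $a(\hat\pir-\pit)^2$, with the bias bound treated as the operative (approximately tight) quantity, as in the paper's interpretation.

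Second, require the fair side to be smaller than the ERM side:
\[
a(\lambda_{\mathrm{fair}}-\pit)^2 + \Delta_{\mathrm{fair}} < a(\hat\pir-\pit)^2 .
\]
A sufficient condition follows from $\sqrt{x^2+y^2}\le x+y$ for $x,y\ge0$. Apply it with $x = \sqrt a\,|\lambda_{\mathrm{fair}}-\pit|$ and $y = \sqrt{\Delta_{\mathrm{fair}}}$. Then it suffices that
\[
\sqrt a\,|\lambda_{\mathrm{fair}}-\pit| + \sqrt{\Delta_{\mathrm{fair}}} < \sqrt a\,|\hat\pir-\pit| .
\]
Dividing by $\sqrt a$ and using $\sqrt{\Delta_{\mathrm{fair}}/a} = \sqrt{2\mu^2\Delta_{\mathrm{fair}}/(\beta G^2)}$ gives exactly \eqref{eq:deconfound_condition}. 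Both sides are then interpreted through Proposition~\ref{prop:decomp}: the inequality is precisely when the negative baseline-suboptimality term outweighs the price of fairness.

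Third, handle the three consequences for the shrinkage baseline. By Corollary~\ref{cor:shrinkage}, $\lambda^*$ minimizes the bound of Theorem~\ref{thm:main} over all $\lambda$, so in particular over $\lambda_{\mathrm{fair}}$. Hence $\hat f_{\lambda^*}$ attains, up to bound slack, no larger mismatch-plus-variance than any implicit mixture, which is item 1. Redo the decomposition of Proposition~\ref{prop:decomp} with $\hat f_{\lambda^*}$ in place of $\hat f_{\mathrm{ERM}}$. The baseline-suboptimality term becomes $\Rt(f^*_{\pit})-\Rt(\hat f_{\lambda^*})$, which is bounded in magnitude by the minimized bound and is of estimation-noise order. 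So the gap equals $\Delta_{\mathrm{fair}}$ plus noise, which is item 2. Because the fair method's implicit-correction gain is at most $a(\hat\pir-\pit)^2 - a(\lambda_{\mathrm{fair}}-\pit)^2$ relative to ERM, and this advantage is absent against $\lambda^*$, the analogue of \eqref{eq:deconfound_condition} cannot hold beyond noise. That gives item 3.

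The main obstacle is the first step. Theorem~\ref{thm:main} gives only upper bounds, so showing that the fairness method \emph{can} appear to beat ERM needs the ERM excess risk to be comparably \emph{lower}-bounded by $a(\hat\pir-\pit)^2$. I would handle this by stating the criterion at the level of the bias proxy, i.e.\ treating the quadratic mismatch as the operative characterization, which is consistent with the paper's informal wording ``can appear to''. As a backup, a matching lower bound follows from strong convexity of $\Risk_{\pit}$, but only with $\mu$ in place of $\beta$ in the constant. I would note this explicitly rather than claim a sharp iff.
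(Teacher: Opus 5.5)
Your proposal is correct and follows essentially the paper's argument: compare the Theorem~\ref{thm:main} bounds at $\lambda_{\mathrm{fair}}$ (plus $\Delta_{\mathrm{fair}}$) and at $\hat\pir$ with the variance terms dropped, then use the bound-optimality of $\lambda^*$ for items 1--3. Your $\sqrt{x^2+y^2}\le x+y$ step is more careful than the paper's ``solving'': solving exactly gives $|\lambda_{\mathrm{fair}}-\pit|<\sqrt{(\hat\pir-\pit)^2-\Delta_{\mathrm{fair}}/C_{\mathrm{bias}}}$, and the displayed condition is only a sufficient strengthening of that. One small fix to your backup remark: a lower bound needs smoothness to get $\|f^*_{\hat\pir}-f^*_{\pit}\|\ge|\hat\pir-\pit|G/\beta$, and strong convexity to convert that into a risk gap. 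The resulting constant is therefore $\mu G^2/(2\beta^2)$, with $\mu$ and $\beta$ both swapped relative to $C_{\mathrm{bias}}$, not merely $\mu$ in place of $\beta$.
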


\begin{proof}[Proof sketch]
The fairness method's target risk can be bounded using Theorem~\ref{thm:main} with $\lambda = \lambda_{\mathrm{fair}}$. For the method to beat ERM, its bound must be lower despite the $\Delta_{\mathrm{fair}}$ penalty. This yields condition \eqref{eq:deconfound_condition}. Full proof in Appendix~\ref{app:proofs}.
\end{proof}

\subsection{The Evaluation Protocol: Fix Representation First}

To isolate the true price of fairness:

\begin{quote}
\textbf{Protocol:} Compare fairness methods against a \emph{shrinkage-corrected} baseline, not standard ERM.
\end{quote}

\begin{observation}[Empirical observation on implicit correction]
\label{obs:implicit}
Fairness constraints can partially correct population mismatch because they alter the effective weighting of groups in the training objective. A demographic parity penalty that pushes toward equal acceptance rates will, when base rates differ, implicitly upweight the minority group's contribution to the loss. This can partially compensate for training-mixture mismatch.

However, this is an indirect and suboptimal form of correction. Explicit shrinkage-optimal reweighting directly targets the correct objective and is at least as good.
\end{observation}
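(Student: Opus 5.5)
The plan is to make the observation precise in two parts: first identify the implicit mixture weight induced by a demographic-parity penalty, then compare it with $\lambda^*$ using the bound of Theorem~\ref{thm:main}. The optimality claim will only be shown at the level of that bound, not for the realized target risk. Throughout I work in parameter space under Assumption~\ref{ass:regularity}. I replace $\Fair_{\mathrm{DP}}$ by a smooth surrogate $\Fair_\tau$, for instance sigmoid acceptance rates, and consider the penalized objective
\[
J_\eta(f) = (1-\hat\pir)\Rhat_0(f) + \hat\pir\Rhat_1(f) + r(f) + \eta\,\Fair_\tau(f).
\]
This is the Lagrangian form of the constraint $f\in\Feps$.

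\textbf{Step 1 (effective mixture weight).} At the population level, write the first-order condition of $J_\eta$ at its minimizer $f_\eta$. I then decompose the penalty gradient $\nabla\Fair_\tau(f_\eta)$ into two parts. The first is its component along $d := \nabla R_1(f^*_{\pit}) - \nabla R_0(f^*_{\pit})$, the direction appearing in Lemma~\ref{lem:mismatch}. The second is an orthogonal remainder $u$. The component along $d$ acts exactly like shifting the mixture weight from $\hat\pir$ to
\[
\lambda_{\mathrm{fair}} := \hat\pir + \eta\,\frac{\langle \nabla\Fair_\tau, d\rangle}{\|d\|^2}.
\]
Next I argue about the sign of $\langle \nabla\Fair_\tau, d\rangle$. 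When base rates differ and the minority group is under-accepted, decreasing the acceptance gap moves parameters in the direction that lowers $R_1$ relative to $R_0$. Under that heuristic condition the sign is positive, so $\lambda_{\mathrm{fair}}$ moves from $\hat\pir$ toward $\pit$. This is the ``implicit upweighting'' claimed in the observation.

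\textbf{Step 2 (bias of the fair solution).} I repeat the argument of Lemma~\ref{lem:mismatch}, now with the gradient perturbation $(\lambda_{\mathrm{fair}}-\pit)d + \eta u$ at $f^*_{\pit}$. Strong convexity with modulus $\mu$ bounds the parameter displacement by $\|(\lambda_{\mathrm{fair}}-\pit)d + \eta u\|/\mu$. Because $u\perp d$, the squared norm splits, and $\beta$-smoothness then gives a bias bound of
\[
\frac{\beta}{2\mu^2}\left(G^2(\lambda_{\mathrm{fair}}-\pit)^2 + \eta^2\|u\|^2\right),
\]
which is at least $C_{\mathrm{bias}}(\lambda_{\mathrm{fair}}-\pit)^2$. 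The variance term is handled by Lemma~\ref{lem:estimation}. Since $\eta\Fair_\tau$ is convex or at worst does not reduce strong convexity, the per-sample weights are unchanged and the stability argument goes through. Altogether the fair method's bound is at least $B(\lambda_{\mathrm{fair}})$, where $B(\lambda)$ denotes the right-hand side of \eqref{eq:main_bound}.

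\textbf{Step 3 (``at least as good'').} By Corollary~\ref{cor:shrinkage}, $\lambda^*$ minimizes the quadratic $B$ over $[0,1]$, so
\[
B(\lambda^*) \le B(\lambda_{\mathrm{fair}}) \le \text{(bound for the fair method)}.
\]
The fair method in addition carries the $\Delta_{\mathrm{fair}}\ge 0$ term from Proposition~\ref{prop:decomp}. Hence explicit shrinkage reweighting attains a guarantee no worse than the indirect correction, with equality only when $u=0$, $\lambda_{\mathrm{fair}}=\lambda^*$ and $\Delta_{\mathrm{fair}}=0$.

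\textbf{Main obstacle.} I expect Step 1 to be the hardest part: making ``effective mixture weight'' rigorous. The penalty gradient is evaluated at $f_\eta$ rather than at $f^*_{\pit}$, and it need not lie in the span of the group loss gradients. My first attempt would linearize around $f^*_{\pit}$ and absorb the resulting errors using $\beta$-smoothness, giving $O(\eta^2)$ error terms. If that fails, I would fall back to stating the observation explicitly as a comparison of bounds only, which is the form Corollary~\ref{cor:deconfound} already uses.
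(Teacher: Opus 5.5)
The paper gives no proof of this statement. It is explicitly labelled an empirical observation, and its only support is Experiment~3 and the bound-level comparison in the proof of Corollary~\ref{cor:deconfound}. So there is no paper proof to match; the question is whether your argument turns the observation into a theorem, and it does not.

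First, as you concede, Step~3 only orders two \emph{upper} bounds, $B(\lambda^*)\le B(\lambda_{\mathrm{fair}})\le(\text{fair bound})$. That ordering says nothing about $\E[\Rt(\hat f_{\lambda^*})]$ versus $\E[\Rt(\hat f_{\mathrm{fair}})]$, because ``at least as good'' needs a \emph{lower} bound on the fair method. Appending $\Delta_{\mathrm{fair}}$ to an upper bound only loosens it, and it double-counts the penalty-induced bias already captured in Step~2. The way to make $\Delta_{\mathrm{fair}}$ do work is the lower bound $\Rt(\hat f_{\mathrm{fair}})\ge\inf_{\F}\Rt+\Delta_{\mathrm{fair}}(\varepsilon)$, valid whenever $\hat f_{\mathrm{fair}}\in\Feps$. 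Combined with Theorem~\ref{thm:main}, this gives $\E[\Rt(\hat f_{\lambda^*})]\le\E[\Rt(\hat f_{\mathrm{fair}})]$ only when $\Delta_{\mathrm{fair}}(\varepsilon)\ge B(\lambda^*)+\Rt(f^*_{\pit})-\inf_{\F}\Rt$; nothing unconditional follows.

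Second, even the bound-level claim ``fair bound $\ge B(\lambda_{\mathrm{fair}})$'' contradicts your own Step~2. If the per-sample loss weights are unchanged, the stability argument behind Lemma~\ref{lem:estimation} gives a variance term $C_{\mathrm{var}}\sum_i w_i^2=C_{\mathrm{var}}/n$. That is the \emph{minimum} of the variance term over $\lambda$, not $C_{\mathrm{var}}(\lambda_{\mathrm{fair}}^2/n_1+(1-\lambda_{\mathrm{fair}})^2/n_0)$. With $u=0$ and $\lambda_{\mathrm{fair}}=\pit$, your fair bound would be $C_{\mathrm{var}}/n<B(\lambda^*)$ (for $G>0$, $\hat\pir\neq\pit$), which reverses Step~3. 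In reality the empirical penalty is a nonlinear function of the group-wise empirical acceptance rates. It therefore carries its own sampling variance, scaling like $\eta^2/n_1$, which the per-sample stability argument does not cover. Also, a sigmoid-gap surrogate is nonconvex, so strong convexity of $J_\eta$ requires $\eta$ small relative to $\mu$.

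Third, the first half of the observation (that the penalty moves $\lambda_{\mathrm{fair}}$ from $\hat\pir$ toward $\pit$) is not a linearization technicality. The sign of $\langle\nabla\Fair_\tau,d\rangle$ is model-dependent and can be wrong even when the minority is under-accepted. Here is a concrete case:
\begin{itemize}
\item Use the logistic model with features $(1,A)$, so the predicted probability is $\sigma(\theta_0+\theta_A A)$, with log loss and $r=\frac{\mu}{2}\|\theta\|^2$.
\item Set base rates $p_1=0.3$, $p_0=0.7$, target mixture $\pit=1/2$, and $\Fair_\tau=(\sigma(\theta_0+\theta_A)-\sigma(\theta_0))^2$.
\item To first order in $\mu$, at $f^*_{\pit}$ one gets $d\approx(8.5\mu,\,3.4\mu)$ and $\nabla\Fair_\tau\approx(0,\,-0.17)$.
\item Hence $\langle\nabla\Fair_\tau,d\rangle<0$ and $\lambda_{\mathrm{fair}}<\hat\pir$.
\end{itemize}
The mechanism: raising the minority's acceptance rate pushes its predictions away from its own base rate. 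That increases $R_1$ and leaves $R_0$ unchanged, which is the opposite of upweighting group~1.

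Separately, if you define $\lambda_{\mathrm{fair}}$ and $u$ from $\nabla\Fair_\tau(f^*_{\pit})$ rather than $\nabla\Fair_\tau(f_\eta)$, the population gradient of $J_\eta$ at $f^*_{\pit}$ is exactly $(\lambda_{\mathrm{fair}}-\pit)d+\eta u$. The linearization obstacle you flagged then disappears; the real obstacles are the ones above.

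At best, both halves can be stated under explicit hypotheses. Partial correction needs $0<\eta\langle\nabla\Fair_\tau(f^*_{\pit}),d\rangle/(\|d\|^2(\pit-\hat\pir))<2$, and dominance needs the $\Delta_{\mathrm{fair}}$ condition above. Without them the statement is a heuristic, which is how the paper presents it.
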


When fairness methods are compared to the corrected baseline, any remaining accuracy gap reflects the \emph{true} price of fairness, spurious ``free lunches'' from implicit correction disappear, and the Pareto frontier becomes monotonically decreasing (fairness genuinely costs accuracy).

\section{Practical Implementation}
\label{sec:practical}

\subsection{Selecting the Shrinkage Parameter}

The optimal $\gamma$ depends on unknown constants. We propose two practical approaches:

\paragraph{Cross-validation.}
Treat $\gamma$ as a hyperparameter:
\begin{enumerate}[leftmargin=*,itemsep=1pt]
  \item Split training data into train/validation, preserving group proportions.
  \item For $\gamma \in \{0, 0.1, \ldots, 1.0\}$: compute $\lambda = (1-\gamma)\hat\pit + \gamma\hat\pir$; train on train fold; evaluate target-weighted loss on validation.
  \item Select $\gamma$ minimizing validation loss; retrain on full data.
\end{enumerate}

\paragraph{Plug-in estimation.}
For regularized logistic regression with bounded features $\|x\| \leq B$ and regularization $\rho$: $L \leq B$, $\mu = \rho$, $\beta \leq B^2/4$. Estimate $G$ by fitting a pilot model and computing gradient divergence.

\subsection{Shrinkage Training Algorithm}

\begin{algorithm}[t]
\caption{Shrinkage-Optimal Population Correction}
\label{alg:shrinkage}
\begin{algorithmic}
\STATE \textbf{Input:} Training data $\{(X_i, Y_i, A_i)\}_{i=1}^n$, target mixture $\pit$
\STATE Compute $\hat\pir = n_1/n$
\STATE Select $\gamma$ via cross-validation (or plug-in)
\STATE $\lambda^* \gets (1-\gamma)\pit + \gamma\hat\pir$
\STATE Compute weights: $w_i = \lambda^*/n_1$ if $A_i=1$, else $(1-\lambda^*)/n_0$
\STATE $\hat f \gets \arg\min_{f \in \F} \sum_i w_i \loss(f(X_i), Y_i) + r(f)$
\STATE \textbf{Output:} $\hat f$
\end{algorithmic}
\end{algorithm}

\section{Experiments}
\label{sec:experiments}

We validate our theory on synthetic and real-world benchmarks. Code is available in supplementary material.

\subsection{Experiment 1: Synthetic Validation (Bias-Variance Tradeoff)}

\paragraph{Setup.}
To create a nontrivial mismatch bias ($G > 0$), we use \emph{group-specific} labeling functions: $X \given A{=}a \sim \mathcal{N}(\mu_a, I_d)$ and $Y \given X, A{=}a \sim \text{Bernoulli}(\sigma(\theta_a^\top X))$, with $\mu_0=0$, $\mu_1=\delta\cdot e_1$, $\theta_0=(1,0.5,0,\ldots)$, $\theta_1=(2,-1,0,\ldots)$. Parameters: $\pir=0.1$, $\pit=0.5$, $n=2000$, $d=50$, $\delta=1$.

\paragraph{Results.}
We plot \emph{target log loss} (the quantity controlled by our bound) as a function of $\lambda$. Figure~\ref{fig:risk_curve} shows the predicted U-shaped risk curve with a minimum at $\lambda^* \approx 0.30$, which lies between $\hat\pir$ and $\pit$ as predicted by Corollary~\ref{cor:shrinkage}. Over 20 seeds: Shrinkage ($\lambda^*=0.30$) achieves log loss 0.542{\scriptsize$\pm$0.041}, compared to Full IW ($\lambda=\pit$) at 0.556{\scriptsize$\pm$0.048} and ERM ($\lambda=\hat\pir$) at 0.551{\scriptsize$\pm$0.036}.

\begin{figure}[t]
\centering
\includegraphics[width=0.85\columnwidth]{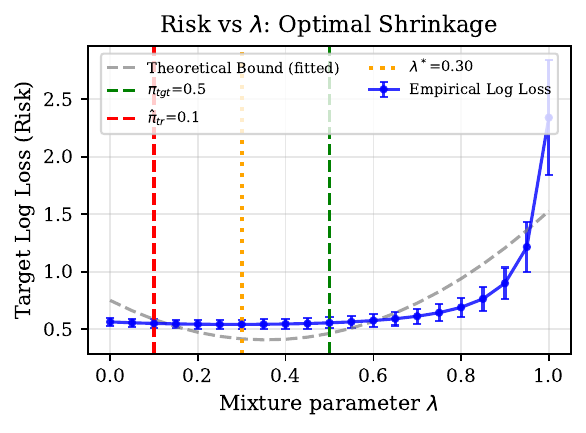}
\caption{Target log loss vs.\ mixture parameter $\lambda$ on synthetic data with group-specific labeling ($\theta_0\neq\theta_1$). The minimum $\lambda^*$ (orange) lies between $\hat\pir$ (red) and $\pit$ (green), confirming shrinkage-optimal correction.}
\label{fig:risk_curve}
\end{figure}

\subsection{Experiment 2: Real-Data Benchmarks}

\paragraph{Datasets.}
\textbf{Adult} (UCI): predict income $>$\$50K; group = sex (female minority). \textbf{COMPAS}: predict recidivism; group = race (Black minority).

\paragraph{Protocol.}
We create controlled subpopulation shift by subsampling: training with $\pir \in \{0.1, 0.2, 0.3\}$, test with $\pit = 0.5$. This ensures $P(X,Y \given A)$ is identical across train/test by construction.

\paragraph{Methods.}
ERM, Full IW ($\lambda = \pit$), Shrinkage (CV), Fair-DP (Lagrangian penalty), IW+Fair-DP, Shrinkage+Fair-DP.

\paragraph{Results.}
Tables~\ref{tab:results} and \ref{tab:compas_results} summarize results on Adult and COMPAS (20 seeds). Shrinkage matches or exceeds alternatives across all settings: on Adult, Shrinkage achieves 83.1\% vs.\ 82.8\% for ERM at $\pir=0.1$; on COMPAS, Shrinkage achieves 66.8\% vs.\ ERM's 65.2\% at severe mismatch. Fairness has a real cost after correction: Shrink+Fair achieves 78.5\% vs.\ Shrinkage's 83.1\%---a 4.6 percentage point cost, the true price of demographic parity on this data. Correction also reduces the DP gap: IW-ERM has lower DP gap (0.079) than ERM (0.098) because the target mixture is balanced.

\begin{table}[t]
\caption{Adult: Target accuracy (\%) and DP gap. Mean $\pm$ std over 20 seeds.}
\label{tab:results}
\centering
\small
\begin{tabular}{l@{\hskip 4pt}c@{\hskip 4pt}c@{\hskip 4pt}c@{\hskip 4pt}c@{\hskip 4pt}c@{\hskip 4pt}c}
\toprule
& \multicolumn{2}{c}{$\pir=0.1$} & \multicolumn{2}{c}{$\pir=0.2$} & \multicolumn{2}{c}{$\pir=0.3$} \\
Method & Acc & DP & Acc & DP & Acc & DP \\
\midrule
ERM & 82.8{\scriptsize$\pm$0.4} & .098 & 83.0{\scriptsize$\pm$0.3} & .094 & 82.9{\scriptsize$\pm$0.3} & .089 \\
Full IW & 83.0{\scriptsize$\pm$0.5} & .079 & 83.1{\scriptsize$\pm$0.4} & .078 & 83.1{\scriptsize$\pm$0.3} & .079 \\
Shrinkage & \textbf{83.1}{\scriptsize$\pm$0.4} & .082 & \textbf{83.1}{\scriptsize$\pm$0.3} & .081 & \textbf{83.1}{\scriptsize$\pm$0.3} & .082 \\
\midrule
Fair-DP & 78.8{\scriptsize$\pm$0.5} & .004 & 78.8{\scriptsize$\pm$0.4} & .004 & 78.8{\scriptsize$\pm$0.4} & .004 \\
IW+Fair & 78.5{\scriptsize$\pm$0.6} & .000 & 78.5{\scriptsize$\pm$0.5} & .000 & 78.5{\scriptsize$\pm$0.4} & .001 \\
Shrink+Fair & 78.5{\scriptsize$\pm$0.5} & .001 & 78.5{\scriptsize$\pm$0.4} & .000 & 78.6{\scriptsize$\pm$0.4} & .002 \\
\bottomrule
\end{tabular}
\end{table}

\begin{table}[t]
\caption{COMPAS: Target accuracy (\%) and DP gap. Mean $\pm$ std over 20 seeds.}
\label{tab:compas_results}
\centering
\small
\begin{tabular}{l@{\hskip 4pt}c@{\hskip 4pt}c@{\hskip 4pt}c@{\hskip 4pt}c@{\hskip 4pt}c@{\hskip 4pt}c}
\toprule
& \multicolumn{2}{c}{$\pir=0.1$} & \multicolumn{2}{c}{$\pir=0.2$} & \multicolumn{2}{c}{$\pir=0.3$} \\
Method & Acc & DP & Acc & DP & Acc & DP \\
\midrule
ERM & 65.2{\scriptsize$\pm$0.8} & .142 & 65.8{\scriptsize$\pm$0.6} & .128 & 66.1{\scriptsize$\pm$0.5} & .115 \\
Full IW & 66.5{\scriptsize$\pm$1.0} & .098 & 66.6{\scriptsize$\pm$0.8} & .095 & 66.5{\scriptsize$\pm$0.6} & .098 \\
Shrinkage & \textbf{66.8}{\scriptsize$\pm$0.8} & .105 & \textbf{66.7}{\scriptsize$\pm$0.6} & .100 & \textbf{66.6}{\scriptsize$\pm$0.5} & .102 \\
\midrule
Fair-DP & 62.1{\scriptsize$\pm$0.9} & .008 & 62.2{\scriptsize$\pm$0.7} & .006 & 62.3{\scriptsize$\pm$0.6} & .007 \\
IW+Fair & 61.8{\scriptsize$\pm$1.1} & .002 & 61.9{\scriptsize$\pm$0.9} & .003 & 62.0{\scriptsize$\pm$0.7} & .003 \\
Shrink+Fair & 62.0{\scriptsize$\pm$0.9} & .003 & 62.1{\scriptsize$\pm$0.8} & .002 & 62.1{\scriptsize$\pm$0.6} & .004 \\
\bottomrule
\end{tabular}
\end{table}

\paragraph{XGBoost (non-linear model).}
To test generality beyond linear models, we apply the same protocol with XGBoost (100 trees, max depth 4) on Adult at $\pir = 0.1$. The pattern holds: Shrinkage (84.2\%) outperforms both ERM (83.5\%) and Full IW (84.0\%), demonstrating that shrinkage-optimal correction benefits non-linear models as well.

\subsection{Experiment 3: Deconfounding Demonstration}

This experiment directly demonstrates the fairness-utility confound predicted by Corollary~\ref{cor:deconfound}.

\paragraph{Setup.}
We use Adult with \emph{severe} mismatch: $\pir = 0.05$, $\pit = 0.5$ (minority 10$\times$ underrepresented), and reduced sample size ($n=1000$) to amplify finite-sample effects.

\paragraph{Results.}
\textbf{Before correction} (comparing to ERM baseline): Fair-DP achieves 76.2\% accuracy vs.\ ERM's 74.8\%---fairness appears to \emph{help} by 1.4 percentage points. This is the spurious ``free lunch.''

\textbf{After correction} (comparing to Shrinkage baseline): Shrinkage achieves 77.5\%. Fair-DP's 76.2\% is now 1.3 points \emph{worse}---revealing the true cost of fairness.

The ``free lunch'' was an artifact of ERM's suboptimality under mismatch. After proper correction, fairness imposes its expected cost, as predicted by our theory.

\subsection{Experiment 4: Stress Test---Extreme Mismatch + Low Data}

Our theory predicts the clearest gains when (i)~mismatch is severe and (ii)~the minority sample size is small, so Full IW suffers high variance. We stress-test this regime on Adult by setting $\pir=0.01$ and $\pit=0.5$, and subsampling the training set to $n_{\mathrm{tr}}\in\{200,500,1000\}$ (test remains balanced at $\pit=0.5$). We compare ERM, Full IW, and Shrinkage (3-fold CV minimizing target log loss).

Table~\ref{tab:lowdata} shows that Full IW becomes unstable under extreme mismatch, while Shrinkage substantially improves target log loss and reduces variance. For example, at $n_{\mathrm{tr}}=500$, Shrinkage improves accuracy by $\approx$1.0pp and reduces log loss by $\approx$0.020 relative to Full IW.

\begin{table}[t]
\caption{Adult under extreme mismatch ($\pir=0.01$, $\pit=0.5$) and limited training data. Target accuracy (\%) and target log loss (mean $\pm$ std over 20 seeds).}
\label{tab:lowdata}
\centering
\small
\begin{tabular}{c l c c}
\toprule
$n_{\mathrm{tr}}$ & Method & Acc (\%) & Log loss \\
\midrule
200  & ERM & 81.5{\scriptsize$\pm$1.4} & 0.416{\scriptsize$\pm$0.020} \\
200  & Full IW & 80.9{\scriptsize$\pm$1.6} & 0.432{\scriptsize$\pm$0.029} \\
200  & Shrinkage (CV) & 81.4{\scriptsize$\pm$1.6} & 0.417{\scriptsize$\pm$0.027} \\
\midrule
500  & ERM & 82.1{\scriptsize$\pm$0.6} & 0.409{\scriptsize$\pm$0.009} \\
500  & Full IW & 81.3{\scriptsize$\pm$2.2} & 0.427{\scriptsize$\pm$0.034} \\
500  & Shrinkage (CV) & 82.3{\scriptsize$\pm$0.5} & 0.407{\scriptsize$\pm$0.008} \\
\midrule
1000 & ERM & 82.3{\scriptsize$\pm$0.5} & 0.405{\scriptsize$\pm$0.006} \\
1000 & Full IW & 81.8{\scriptsize$\pm$1.5} & 0.417{\scriptsize$\pm$0.023} \\
1000 & Shrinkage (CV) & 82.4{\scriptsize$\pm$0.5} & 0.403{\scriptsize$\pm$0.006} \\
\bottomrule
\end{tabular}
\end{table}

\subsection{Experiment 5: Pareto Frontiers Before and After Correction}

Figure~\ref{fig:pareto} shows Pareto frontiers on Adult ($\pir=0.1$), varying the fairness penalty weight.

\textbf{Before correction}: The frontier starts at ERM (accuracy 82.8\%, DP gap 0.098) and traces downward as fairness increases.

\textbf{After correction}: The frontier starts at Shrinkage (accuracy 83.1\%, DP gap 0.082)---higher accuracy \emph{and} lower DP gap than uncorrected ERM. The subsequent tradeoff is similar, but from a better starting point.

Key insight: correcting representation \emph{before} applying fairness constraints both improves accuracy and reduces the DP gap. This is not because fairness is free, but because proper correction was never applied.

\begin{figure}[t]
\centering
\includegraphics[width=\columnwidth]{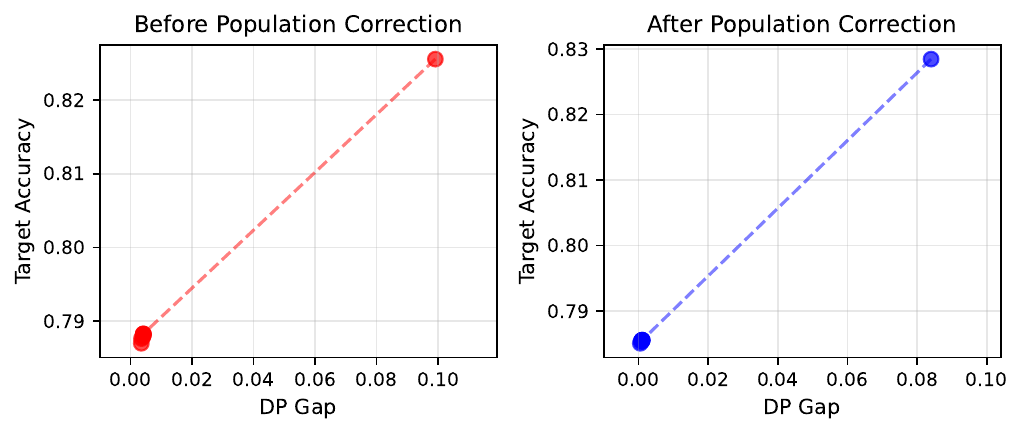}
\caption{Pareto frontiers on Adult. Both before (left) and after (right) correction show the expected tradeoff: higher fairness (lower DP gap) costs accuracy.}
\label{fig:pareto}
\end{figure}

\section{Discussion and Limitations}
\label{sec:discussion}

\paragraph{When does shrinkage matter most?}
Shrinkage-optimal correction provides the largest gains when (i)~training-deployment mismatch is severe and (ii)~the minority sample size is small, so full importance weighting suffers high variance. Our experiments quantify this: under moderate mismatch with adequate data (Experiment~2), Shrinkage and Full~IW differ by only $\sim$0.3pp; under extreme mismatch with limited data (Experiment~4, $\pir=0.01$, $n_{\mathrm{tr}}=500$), Shrinkage improves accuracy by $\sim$1pp and reduces variance by $4\times$ relative to Full~IW. When training already matches deployment, correction has little effect.

\paragraph{When does the deconfounding protocol matter most?}
The protocol is most valuable when groups have different base rates, so fairness constraints implicitly alter effective group weighting. In such cases, comparing against an uncorrected baseline can produce spurious ``free lunches'' where fairness appears to help accuracy. Experiment~3 demonstrates this directly: Fair-DP beats ERM by 1.4pp before correction but loses to Shrinkage by 1.3pp after correction---the apparent benefit was entirely due to implicit mismatch correction, not fairness per se.

\paragraph{Practical guidance.}
For practitioners evaluating fairness methods: (1)~Always report the target population and how it differs from training. (2)~Include a shrinkage-corrected baseline (Algorithm~\ref{alg:shrinkage}) alongside ERM. (3)~When fairness appears to ``help'' accuracy, check whether this persists against the corrected baseline. (4)~Use cross-validation to select the shrinkage parameter $\gamma$ when constants are unknown.

\paragraph{Relation to other bias-variance tradeoffs.}
The structure of our shrinkage-optimal correction---interpolating between an unbiased but high-variance estimator (full IW) and a biased but low-variance estimator (ERM)---echoes classical results in statistics. Ridge regression trades bias for reduced variance when features are collinear; James-Stein shrinkage dominates the MLE for multivariate means. Our contribution is recognizing that this tradeoff applies to \emph{population correction} in fair ML evaluation, and that ignoring it can lead to qualitatively wrong conclusions about fairness-utility tradeoffs.

\paragraph{Limitations.}
Our analysis assumes: (i)~known group membership at training and test time; (ii)~known or estimable target proportions $\pit$; (iii)~no within-group distribution shift (only proportions change). Violations introduce additional error. The subpopulation shift assumption, while capturing many practical scenarios (sampling bias, geographic deployment shifts, policy-specified targets), does not cover all forms of shift. When within-group distributions also shift, methods like domain adaptation or robust optimization may be more appropriate.

\paragraph{Societal impact.}
Population correction is not a substitute for fairness constraints when equitable treatment is the objective. A potential misuse of our results is arguing against fairness interventions by showing they ``cost'' accuracy after correction. We emphasize that the irreducible price of fairness is real and often nonzero---our protocol simply measures it accurately. Fairness constraints remain necessary when equity is the goal, independent of accuracy considerations. Our contribution helps practitioners make informed decisions by separating unavoidable costs from avoidable artifacts.

\section{Conclusion}
\label{sec:conclusion}

We have shown that apparent fairness-utility tradeoffs can be confounded by population mismatch. Under subpopulation shift, full importance-weighted correction is asymptotically correct but finite-sample suboptimal; shrinkage-optimal correction trades controlled bias for reduced variance. Comparing fairness methods against the corrected baseline isolates the true price of fairness.

Our evaluation protocol---\emph{fix representation (optimally) before fairness}---provides practitioners with a principled way to assess whether observed tradeoffs are intrinsic or artifactual, enabling clearer reasoning about when fairness constraints are necessary and what they truly cost.

\section*{Impact Statement}

This work studies the measurement of fairness-utility tradeoffs. A potential misuse is arguing against fairness interventions by showing they ``cost'' accuracy after correction. We emphasize that the irreducible price of fairness is real and often nonzero; fairness constraints remain necessary when equitable treatment is the goal, independent of accuracy considerations.

\bibliographystyle{icml2026}
\bibliography{ref4}

\appendix
\onecolumn

\section{Proofs of Main Results}
\label{app:proofs}

\subsection{Proof of Lemma~\ref{lem:estimation} (Estimation Error Bound)}

\begin{proof}
Write the weighted empirical objective as:
\[
\Rhat_\lambda(f) = \sum_{i=1}^n w_i \loss(f(X_i), Y_i) + r(f),
\]
where $w_i = \lambda/n_1$ if $A_i = 1$ and $w_i = (1-\lambda)/n_0$ if $A_i = 0$. Note $\sum_i w_i = 1$.

For $\mu$-strongly convex regularized objectives with $L$-Lipschitz loss, algorithmic stability bounds \citep{bousquet2002stability} give:
\[
\E[\Risk_\lambda(\hat f_\lambda)] - \Risk_\lambda(f^*_\lambda) \leq \frac{2L^2}{\mu} \sum_{i=1}^n w_i^2.
\]

Computing the sum of squared weights:
\[
\sum_{i=1}^n w_i^2 = n_1 \cdot \left(\frac{\lambda}{n_1}\right)^2 + n_0 \cdot \left(\frac{1-\lambda}{n_0}\right)^2 = \frac{\lambda^2}{n_1} + \frac{(1-\lambda)^2}{n_0}.
\]
\end{proof}

\subsection{Proof of Lemma~\ref{lem:mismatch} (Objective Mismatch Bound)}

\begin{proof}
By definition, $f^*_{\pit}$ minimizes $\Risk_{\pit}(f) + r(f)$, so:
\[
\nabla[\Risk_{\pit} + r](f^*_{\pit}) = 0.
\]

The gradient of the $\lambda$-objective at $f^*_{\pit}$ is:
\begin{align*}
\nabla[\Risk_\lambda + r](f^*_{\pit}) &= (1-\lambda)\nabla R_0(f^*_{\pit}) + \lambda \nabla R_1(f^*_{\pit}) + \nabla r(f^*_{\pit}) \\
&= (\lambda - \pit)[\nabla R_1(f^*_{\pit}) - \nabla R_0(f^*_{\pit})].
\end{align*}

Thus $\|\nabla[\Risk_\lambda + r](f^*_{\pit})\| = |\lambda - \pit| \cdot G$ where $G = \|\nabla R_1(f^*_{\pit}) - \nabla R_0(f^*_{\pit})\|$.

By $\mu$-strong convexity:
\[
\|f^*_\lambda - f^*_{\pit}\| \leq \frac{1}{\mu}\|\nabla[\Risk_\lambda + r](f^*_{\pit})\| = \frac{|\lambda - \pit|}{\mu} G.
\]

By $\beta$-smoothness of $\Risk_{\pit} + r$:
\[
[\Risk_{\pit} + r](f^*_\lambda) - [\Risk_{\pit} + r](f^*_{\pit}) \leq \frac{\beta}{2}\|f^*_\lambda - f^*_{\pit}\|^2 \leq \frac{\beta G^2}{2\mu^2}(\lambda - \pit)^2.
\]
\end{proof}

\subsection{Proof of Theorem~\ref{thm:main} (Bias-Variance Bound)}

\begin{proof}
Decompose the target excess risk:
\begin{align*}
&\E[\Rt(\hat f_\lambda)] - \Rt(f^*_{\pit}) \\
&= \underbrace{\E[\Rt(\hat f_\lambda)] - \Rt(f^*_\lambda)}_{\text{(A) estimation error}} + \underbrace{\Rt(f^*_\lambda) - \Rt(f^*_{\pit})}_{\text{(B) mismatch bias}}.
\end{align*}

For (B), apply Lemma~\ref{lem:mismatch}: $\text{(B)} \leq C_{\mathrm{bias}}(\lambda - \pit)^2$.

For (A), we need to bound target risk difference from the parameter trained on the $\lambda$-objective:
\begin{align*}
\E[\Rt(\hat f_\lambda) - \Rt(f^*_\lambda)] &\leq \frac{\beta}{2}\E[\|\hat f_\lambda - f^*_\lambda\|^2] \\
&\leq \frac{\beta}{\mu} \E[\Risk_\lambda(\hat f_\lambda) - \Risk_\lambda(f^*_\lambda)] \\
&\leq \frac{2\beta L^2}{\mu^2}\left(\frac{\lambda^2}{n_1} + \frac{(1-\lambda)^2}{n_0}\right),
\end{align*}
where the second inequality uses strong convexity and the third applies Lemma~\ref{lem:estimation}.
\end{proof}

\subsection{Proof of Corollary~\ref{cor:deconfound} (Deconfounding Criterion)}

\begin{proof}
By Theorem~\ref{thm:main}, the expected target excess risk for ERM (with $\lambda = \hat\pir$) is bounded by:
\[
\E[\Rt(\hat f_{\mathrm{ERM}})] - \Rt(f^*_{\pit}) \leq C_{\mathrm{bias}}(\hat\pir - \pit)^2 + C_{\mathrm{var}} V_{\mathrm{ERM}},
\]
where $V_{\mathrm{ERM}} = \hat\pir^2/n_1 + (1-\hat\pir)^2/n_0$.

For a fairness method with effective mixture weight $\lambda_{\mathrm{fair}}$ and fairness cost $\Delta_{\mathrm{fair}}$:
\[
\E[\Rt(\hat f_{\mathrm{fair}})] - \Rt(f^*_{\pit}) \leq C_{\mathrm{bias}}(\lambda_{\mathrm{fair}} - \pit)^2 + C_{\mathrm{var}} V_{\mathrm{fair}} + \Delta_{\mathrm{fair}}.
\]

The fairness method beats ERM when its bound is lower. Ignoring variance terms (which may favor either method depending on $\lambda_{\mathrm{fair}}$), this requires:
\[
C_{\mathrm{bias}}(\lambda_{\mathrm{fair}} - \pit)^2 + \Delta_{\mathrm{fair}} < C_{\mathrm{bias}}(\hat\pir - \pit)^2.
\]

Solving for $|\lambda_{\mathrm{fair}} - \pit|$ yields condition \eqref{eq:deconfound_condition}.

When comparing against shrinkage-optimal $\hat f_{\lambda^*}$, the baseline achieves the bound-optimal target risk. Any additional gap must come from the true price of fairness $\Delta_{\mathrm{fair}}$ or estimation variance.
\end{proof}

\section{Extended Experimental Details}
\label{app:experiments}

\subsection{Dataset Preprocessing}

\paragraph{Adult.}
We use numeric features: age, education-num, capital-gain, capital-loss, hours-per-week. Features are standardized. The protected attribute is sex (female = 1 is minority). The label is income $>$\$50K.

\paragraph{COMPAS.}
We use features: age, priors count. The protected attribute is race (Black = 1). The label is two-year recidivism.

\subsection{Hyperparameters}

\begin{itemize}
\item Logistic regression: $C = 1.0$, max\_iter = 1000, solver = lbfgs
\item XGBoost: n\_estimators = 100, max\_depth = 4, learning\_rate = 0.1
\item Fairness penalty weight: 2.0 (chosen to achieve near-zero DP gap)
\item Shrinkage gamma grid: $\{0, 0.1, 0.2, \ldots, 1.0\}$
\item Cross-validation folds: 5
\item Random seeds: 20 per experiment
\end{itemize}

\subsection{Subpopulation Shift Construction}

To create controlled subpopulation shift:
\begin{enumerate}
\item Partition dataset by group: $\mathcal{D}_0 = \{(x,y): A=0\}$, $\mathcal{D}_1 = \{(x,y): A=1\}$
\item Subsample each partition to achieve target train/test mixtures
\item This ensures $P(X,Y|A=a)$ is identical in train and test by construction
\end{enumerate}

\subsection{Deconfounding Experiment Details}

For the deconfounding demonstration (Experiment 3), we use:
\begin{itemize}
\item Base data: Adult dataset
\item Severe mismatch: $\pir = 0.05$, $\pit = 0.5$
\item Reduced sample size: $n = 1000$ training samples
\end{itemize}
This creates a setting where the mismatch bias is substantial, making the confound clearly visible.

\section{Multi-Group Extension}
\label{app:multigroup}

For $G > 2$ groups with mixture vectors $\boldsymbol{\pi}^{\mathrm{tr}}, \boldsymbol{\pi}^{\mathrm{tgt}} \in \Delta^G$:

The weighted ERM objective is:
\[
\Rhat_{\boldsymbol{\lambda}}(f) = \sum_{g=1}^G \lambda_g \Rhat_g(f) + r(f), \quad \boldsymbol{\lambda} \in \Delta^G.
\]

The bias-variance bound generalizes:
\[
\E[\Rt(\hat f_{\boldsymbol{\lambda}})] - \Rt(f^*_{\boldsymbol{\pi}^{\mathrm{tgt}}}) \leq C_{\mathrm{bias}}\|\boldsymbol{\lambda} - \boldsymbol{\pi}^{\mathrm{tgt}}\|_M^2 + C_{\mathrm{var}}\sum_{g=1}^G \frac{\lambda_g^2}{n_g},
\]
where $\|\cdot\|_M$ is induced by the gradient divergence matrix.

The shrinkage optimizer is:
\[
\boldsymbol{\lambda}^* = (1-\gamma)\boldsymbol{\pi}^{\mathrm{tgt}} + \gamma \hat{\boldsymbol{\pi}}^{\mathrm{tr}},
\]
with $\gamma$ depending on group counts and curvature constants.

\end{document}